    \numberwithin{equation}{section}
    \definecolor{plum}  {rgb}{.4,0,.4}
    \definecolor{BrickRed} {rgb}{0.6,0,0}
\def\ddefloop#1{\ifx\ddefloop#1\else\ddef{#1}\expandafter\ddefloop\fi}
\def\ddef#1{\expandafter\def\csname c#1\endcsname{\ensuremath{\mathcal{#1}}}}
\def\ddef#1{\expandafter\def\csname s#1\endcsname{\ensuremath{\mathsf{#1}}}}
\def\ddef#1{\expandafter\def\csname b#1\endcsname{\ensuremath{\mathbf{#1}}}}
\def\Reals{{\mathbb R}}
\def\Naturals{{\mathbb N}}
\def\deq{:=}
\def\eps{\varepsilon}
\def\trn{{\hbox{\textit{\tiny T}}}}
\def\fix#1{#1}
\def\Exp{{\mathbf E}}
\DeclareMathOperator*{\argmin}{arg\,min}
    \newtheorem{theorem}{Theorem}[section]
    \newtheorem{lemma}{Lemma}[section]
    \newtheorem{assumption}{Assumption}[section]
\begin{document}

\title{Learning Recurrent Neural Net Models of Nonlinear Systems}

\author{Joshua Hanson\thanks{University of Illinois; e-mail: jmh4@illinois.edu.} \and Maxim Raginsky\thanks{University of Illinois; e-mail: maxim@illinois.edu.} \and Eduardo Sontag\thanks{Northeastern University; e-mail: e.sontag@northeastern.edu.}}

\date{}

\maketitle

\begin{abstract}%
    
We consider the following learning problem: Given sample pairs of input and output signals generated by an unknown nonlinear system (which is not assumed to be causal or time-invariant), we wish to find a continuous-time recurrent neural net with hyperbolic tangent activation function that approximately reproduces the underlying i/o behavior with high confidence. Leveraging earlier work concerned with matching output derivatives up to a given finite order  \citep{Sontag_recur}, we reformulate the learning problem in familiar system-theoretic language and derive quantitative guarantees on the sup-norm risk of the learned model in terms of the number of neurons, the sample size, the number of derivatives being matched, and the regularity properties of the inputs, the outputs, and the unknown i/o map.

\end{abstract}


\section{Introduction}

We consider a learning-theoretic framework for system identification, where the goal is to approximate an unknown nonlinear input/output (i/o) map by a continuous-time recurrent neural net (RNN) on the basis of a finite collection of input/output pairs. The approximation criterion is the $\cL^\infty$ norm of the difference between the ground-truth output and the one predicted by the learned model. 

Earlier work \citep{Sontag_recur} has addressed a related problem of reproducing output $k$-jets as a function of input $(k-1)$-jets via RNNs, where a $k$-\textit{jet} is defined as the vector of derivatives up to order $k$ of the output (respectively, input) signal evaluated at the initialization time. Equivalently, $k$-jets can be defined as optimal degree-$k$ polynomial approximations, i.e., truncated Taylor series. In this work, we build on the aforementioned result about matching $k$-jets, but work in a familiar system-theoretic setting using the language of i/o maps. 

The training procedure used both in that work and here differs to some extent from the standard ``backpropagation through time'' algorithm often used for training RNNs. Rather than forward-propagating training inputs through the net or setting up an adjoint equation for the net parameters, we ``pull back'' the input signals to the initialization time and compute the corresponding output $k$-jets, which can be expressed explicitly as a function of the weights of the neural net, its initial state, and the $(k-1)$-jets of the input. The loss function is then evaluated with the predicted and ground-truth output jets. At training time, there is no requirement to advance inputs through the network. In essence, the RNN can be trained using conventional nonlinear regression as if it were a single-layer feedforward net. 

In the following section, we describe the learning problem, state our assumptions, and introduce the needed notation. Proceeding in Section \ref{sec:learning_procedure}, we develop the setting of jets and use this language to formulate the proposed learning procedure and state our main results. Some directions for future research are also outlined. Section \ref{sec:proofs_prelim} contains some technical lemmas, followed by the proof of the main theorems in Sections \ref{sec:proof_riskbound} and \ref{sec:proof_main}.

\section{Problem formulation}

In this work, a \textit{system} (or an \textit{i/o map}) is a nonlinear operator $\sF : C([0,T]) \to C([0,T])$, where $C([0,T])$ is the Banach space of continuous functions $u : [0,T] \to \Reals$ equipped with the sup norm
\begin{align*}
    \|u\|_\infty \deq \sup_{t \in [0,T]}|u(t)|.
\end{align*}
The learning problem can be phrased as follows: Let $N$ pairs $(u^1,y^1),\ldots,(u^N,y^N)$ be given, where $u^i \in C([0,T])$ are the inputs and $y^i = \sF u^i \in C([0,T])$ are the corresponding outputs. We wish to construct a system $\hat{\sF} : C([0,T]) \to C([0,T])$ that approximately reproduces the unknown system $\sF$ on a given class of inputs $\cU \subset C([0,T])$. (We focus on single-input, single-output systems mainly for simplicity; our approach easily extends to multiple inputs and outputs.) Next, we impose a set of minimal assumptions on the system $\sF$ and specify the accuracy criterion. 

We first recall the definition of the \textit{modulus of continuity} of a function $u \in C([0,T])$ (see, e.g., Ch.~2  of \citet{DevLor93}):
\begin{align*}
    \omega_u(\delta) \deq \sup_{t_1,t_2 \in [0,T] \atop |t_1-t_2|\le \delta} |u(t_1)-u(t_2)|.
\end{align*}
The function $\delta \mapsto \omega_u(\delta)$ is nondecreasing, and, since any $u \in C([0,T])$ is uniformly continuous, $\lim_{\delta \downarrow 0}\omega_u(\delta) = \omega_u(0) = 0$. (In fact, we will refer to any function with these properties that majorizes $\omega_u$ as a modulus of continuity of $u$.) We impose the following assumption on the class of inputs $\cU$:

\begin{assumption}\label{as:equicontinuous}
The inputs in $\cU$ are uniformly bounded, i.e., $R \deq \sup_{ u \in \cU}\| u\|_\infty < \infty$,  and equicontinuous with common modulus of continuity $\omega_{\cU}(\delta)$:
\begin{equation*}
    \sup_{ u \in \cU} \omega_ u(\delta)\leq \omega_{\cU}(\delta),
\end{equation*}
where $\omega_\cU(\delta)$ is a nondecreasing function that satisfies $\lim_{\delta \downarrow 0}\omega_\cU(\delta) = \omega_\cU(0) = 0$.
\end{assumption}
We also use moduli of continuity to describe the regularity of $\sF$:
\begin{assumption}\label{as:outputs} The output space $\cY = \sF(\cU)$, i.e., the image of $\cU$ under $\sF$, is equicontinuous with a common modulus of continuity $\omega_\cY(\delta)$.
\end{assumption}
Finally, we assume that $\sF$ has the bounded-in, bounded-out property:
\begin{assumption}\label{as:bibo} $\gamma_\sF(R) \deq \displaystyle\sup_{u \in C[0,T] \atop \|u\|_\infty 
		\le R} \| \sF u \|_\infty < \infty$.\end{assumption}
Examples of classes of inputs that satisfy Assumption~\ref{as:equicontinuous} include:
\begin{itemize}
	\item finite combinations of Fourier terms
\begin{align}\label{eq:Fourier}
	u(t) = \sum^m_{i=1} c_i \sin(\omega_i t + \alpha_i),
\end{align}
for all $m \ge 1$, provided the coefficients $c_i$ and the frequencies $\omega_i$ satisfy the inequalities $\sum^m_{i=1}|c_i| \le R$ and $\sum^m_{i=1}|c_i\omega_i| \le L$ for some fixed finite constants $R$ and $L$;
\item polynomial inputs
\begin{align}\label{eq:poly}
	u(t) = \sum^m_{i=0} c_i t^i
\end{align}
for all $m \ge 0$, provided the coefficients $c_i$ satisfy the inequalities $\sum^m_{i=0} |c_i|T^i \le R$ and $\sum^m_{i=1} i|c_i| T^{i-1} \le L$ for some $R,L <\infty$.
\end{itemize}
In both cases, Assumption~\ref{as:equicontinuous} holds with $\omega_\cU(\delta) = L\delta$. As an example of a system satisfying our hypotheses, consider a state-space model of the form
\begin{align*}
	\dot{x}(t) &= f(x(t)) + g(x(t))u(t), \\
	y(t) &= h(x(t))
\end{align*}
where $x(t) \in \Reals^n$ is an internal state with a given initial condition $x(0) = \xi$. Let $\sF$ be the induced i/o map that sends the input $u : [0,T] \to \Reals$ to the output $y : [0,T] \to \Reals$. Then it is not hard to verify that Assumptions~\ref{as:outputs} and \ref{as:bibo} will hold if the functions $f : \Reals^n \to \Reals^n$, $g : \Reals^n \to \Reals^n$, and $h : \Reals^n \to \Reals$ are all Lipschitz continuous.

We assume the sample inputs $u^i$ are independent and identically distributed (i.i.d.)\ random elements of $C([0,T])$ drawn according to a fixed Borel probability measure $\mu$ supported on $\cU$. Thus, the input-output pairs $(u^1, y^1),\ldots,(u^N, y^N)$, with $y^i = \sF u^i$, are themselves i.i.d.\ random elements of the product space $\cU \times \cY$. For instance, $\sF$ could be a model of a two-terminal electronic device whose terminals are connected through a switch to an excitation circuit consisting of linear and nonlinear elements and current and/or voltage sources \citep{Chua80}. Suppose this excitation circuit is specified by a vector of parameters $\theta \in \Reals^d$; e.g., it could be used to generate sinusoidal or polynomial inputs, as in \eqref{eq:Fourier} or \eqref{eq:poly}. We can then generate $N$ i.i.d.\ samples $\theta^1,\ldots,\theta^N$ according to a fixed Borel probability measure $\nu$ on $\Reals^d$. Each sample $\theta^i$ corresponds to a random realization of the excitation circuit so that, when the switch is closed at time $t=0$ and open at time $t=T$, we can take the input $ u^i : [0,T] \to \Reals$ to be the voltage waveform across $\sF$ and the output $ y^i : [0,T] \to \Reals$ to be the current waveform through $\sF$. (This assumes that $\sF$ is voltage-controlled.) Thus, the probability measure on the input-output space $\cU \times \cY$ is well-defined but specified \textit{indirectly} through $\nu$, the structure of the excitation circuit, and $\sF$. At any rate, given $\sF$ and an approximating system $\hat{\sF}$, we define the $\cL^\infty$ risk
\begin{align*}
    \cL(\hat{\sF}) \deq \Exp_\mu \big[\| \hat{\sF} u - \sF  u \|_\infty\big],
\end{align*}
where the expectation is taken with respect to the probability measure $\mu$ on the input space $\cU$. The goal is to generate, on the basis of the observed input-output pairs $(u^i, y^i)$, an approximate system $\hat{\sF}$ from a given model class $\cF$, so that the risk $\cL(\hat{\sF})$ (which is a random variable due to its dependence on the training data) is small with high probability.

\section{The proposed learning procedure and its performance}\label{sec:learning_procedure}

\paragraph{Recurrent neural nets.} We first specify the model class $\cF$ from which our learning procedure will select the approximation $\hat{\sF}$. The systems in $\cF$ are described by differential equations of the form
\begin{subequations}\label{eq:RNN}
\begin{align}
    \dot{x}(t) &= \sigma^{(n)}(Ax(t)+b u(t)) \\
    y(t) &= c^\trn x(t)
\end{align}
\end{subequations}
for $t \in [0,T]$ with initial condition $x(0) = \xi \in \Reals^n$. Here, $A \in \Reals^{n \times n}$, $b,c \in \Reals^n$, and $\sigma^{(n)} : \Reals^n \to \Reals^n$ is the diagonal map $\sigma^{(n)}((x_1,\ldots,x_n)^\trn) \deq (\sigma(x_1),\ldots,\sigma(x_n))^\trn$ with $\sigma(r) = \tanh(r)$. The system \eqref{eq:RNN} is a continuous-time \textit{recurrent neural net} (RNN) with $n$ neurons. Each pair $(\Sigma,\xi)$, where $\Sigma \deq (A,b,c)$, specifies an input-output map $\sF_{\Sigma,\xi}$ that sends an input $ u \in C([0,T])$ to the corresponding output $ y$ given by
\begin{align*}
    y(t) = c^\trn \xi + \int^t_0 c^\trn \sigma^{(n)}(A x(\tau) + b u(\tau)) \dif \tau, \qquad t \in [0,T].
\end{align*}
For $0 < M < \infty$, we define the class of systems
\begin{align*}
    \cF(M) \deq \left\{ \sF_{\Sigma,\xi} = \sF_{(A,b,c),\xi} : \|A\|,|b|,|c|,|\xi| \le M \right\},
\end{align*}
where $\|A\|$ is the spectral norm of $A$, and $|b|,|c|,|\xi|$ are the $\ell^2$ norms of $b,c,\xi$. Our learning procedure will generate a pair $(\hat{\Sigma},\hat{\xi})$ based on the data $\{(u^i, y^i)\}$, and output the model $\hat{\sF} = \sF_{\hat{\Sigma},\hat{\xi}} \in \cF(M)$. With a slight abuse of notation, we will often write $(\Sigma,\xi) \in \cF(M)$ instead of $\sF_{\Sigma,\xi} \in \cF(M)$.

\paragraph{Jets.} While it is well-known that RNNs of the form \eqref{eq:RNN} are universal approximators for i/o maps $\sF$ that admit smooth nonlinear state-space realizations $\dot{x} = f(x,u)$, $y = g(x)$ \citep{sontag_1992,funahashi_1993,HansonRaginskyRNN}, here we are \textit{not} assuming that $\sF$ admits such a realization (in fact, we are not even requiring $\sF$ to be causal or time-invariant). Nevertheless, we will show that, provided Assumptions~\ref{as:equicontinuous}--\ref{as:bibo} hold for our learning problem, we will be able to approximate $\sF$ by a recurrent net model  which will have the properties of causality and time invariance by construction. Our approach proceeds by way of reducing the infinite-dimensional problem of learning the i/o map $\sF_{\hat{\Sigma},\hat{\xi}}$ to a certain finite-dimensional problem \citep{Sontag_recur}. To that end, consider a system of the form \eqref{eq:RNN} fed with an input $ u$ which has at least $k-1$ derivatives at $t=0$. Then the output $ y = \sF_{\Sigma,\xi} u$ will have at least $k$ derivatives at $t=0$, which can be computed explicitly as
\begin{align*}
    y^{(0)}(0) &= c^\trn \xi, \quad y^{(1)}(0) = c^\trn \sigma^{(n)}(A\xi + b u(0)), \quad  \ldots.
\end{align*}
We can then define the map $Y_{k,\Sigma,\xi} : \Reals^k \to \Reals^{k+1}$ according to
\begin{align*}
    Y_{k,\Sigma,\xi}((u(0),u'(0),\ldots,u^{(k-1)}(0))^\trn) \deq (y(0),y'(0),\ldots,y^{(k)}(0))^\trn,
\end{align*}
where $y^{(\ell)}(0) =\frac{\dif^{\,\ell}}{\dif t^\ell}\big|_{t=0} \sF_{\Sigma,\xi} u(t)$ for $0 \le \ell \le k$. We can also phrase this in terms of \textit{jets}, where the $k$-jet at $t=0$ of a function $f : \Reals \to \Reals$ which is $C^k$ in some neighborhood of $t=0$ is the degree-$k$ polynomial
\begin{align*}
    J^k_0 f(s) \deq \sum^k_{\ell=0} f^{(\ell)}(0) \frac{s^\ell}{\ell!}.
\end{align*}
Then, for inputs that are $C^{k-1}$ in some neighborhood of $t=0$, the map $Y_{k,\Sigma,\xi} : \Reals^k \to \Reals^{k+1}$ can be lifted to a map from ${(k-1)}$-jets to $k$-jets via $J^{k-1}_0 u \mapsto J^{k}_0 \sF_{\Sigma,\xi}u$, where the vector of coefficients of $J^k_0 \sF_{\Sigma,\xi} u$ is the image of the vector of coefficients of $J^{k-1}_0 u$ under $Y_{k,\Sigma,\xi}$. Thus, at least for inputs $u$ that are of class $C^{k-1}$, the i/o map $\sF_{\Sigma,\xi}$ will be a good approximation to $\sF$ provided (a) the outputs $\sF u$ and $\sF_{\Sigma,\xi} u$ can be accurately approximated by their $k$-jets $J^k_0 \sF u$ and $J^k_0 \sF_{\Sigma,\xi}u$ (e.g., if $k$ is sufficiently large) and (b) the $k$-jets $J^k_0 \sF u$ and $J^k_0 \sF_{\Sigma,\xi} u$ are close to one another in sup norm on $[0,T]$. 

\paragraph{The learning procedure.} Since the inputs in $\cU$ and the corresponding outputs in $\cY$ are only assumed to be continuous, we will first approximate them by functions that have as many derivatives at $t=0$ as needed. To that end, we start by defining for each $k \in \Naturals$ two linear maps $\sS_k : C([0,T]) \to \Reals^k$ and $\sS^*_k : \Reals^k \to C([0,T])$
according to
\begin{align*}
    \sS_k(u) := \Big(B_{k-1}( u,0), \frac{\dif}{\dif t}{\Big|}_{t=0} B_{k-1}(u,t), \dots, \frac{\dif^{\,k-1}}{\dif t^{k-1}}{\Big|}_{t=0} B_{k-1}( u,t) \Big)^\trn,
\end{align*}
and
\begin{align*}
          \sS^*_k((a_0,\dots,a_{k-1})^\trn)(t) := \sum_{\ell=0}^{k-1} a_\ell \frac{t^\ell}{\ell !},
\end{align*}
where
\begin{align*}
    B_{m}(u,t) \deq \sum^m_{i=0} u\left(\frac{i T}{m}\right) {m \choose i} \left(\frac{t}{T}\right)^i \left(1-\frac{t}{T}\right)^{m-i}, \qquad t \in [0,T]
\end{align*}
is the degree-$m$ \textit{Bernstein polynomial} of $u \in C([0,T])$ \citep[Ch.~1]{DevLor93}. Whenever no confusion will arise, we will also write $B_m u$ instead of $B_m(u,\cdot)$.
Observe that  $(\sS^*_k \circ \sS_k)u = B_{k-1}(u,\cdot)$. To motivate the introduction of these objects, we give the following bound on the expected risk of any recurrent net model $\sF_{\Sigma,\xi}$:

\begin{theorem}\label{thm:riskbound} Under Assumptions~\ref{as:equicontinuous} and \ref{as:outputs}, the following holds for any $\Sigma = (A,b,c)$, $\xi$, and $k \ge 2$:
    \begin{align}\label{eq:riskbound}
    \begin{split}
        \cL(\sF_{\Sigma,\xi}) &\le 2\omega_\cY\left(\frac{T}{\sqrt{k}}\right) +2|c| |b|e^{\|A\|T} \omega_\cU\left(\frac{2T}{\sqrt{k}}\right) + \fix{4}|c| Te^{\|A\|T} \sqrt{\frac{n}{k}} \\
		& \qquad \qquad \fix{ + \Exp_\mu [ \| J^k_0 \sF_{\Sigma,\xi} B_{k-1}u - \sF_{\Sigma,\xi} B_{k-1}u\|]} \\
        & \qquad \qquad + \Exp_\mu \left[ \| (\sS^*_{k+1} \circ Y_{k,\Sigma,\xi} \circ \sS_k)u - (\sS^*_{k+1} \circ \sS_{k+1} \circ \sF )u  \|_\infty \right].
    \end{split}
    \end{align}
\end{theorem}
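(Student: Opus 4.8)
The plan is to control the integrand $\|\sF_{\Sigma,\xi}u - \sF u\|_\infty$ pointwise for each fixed $u \in \cU$ by a four-term triangle inequality, three of whose terms admit deterministic bounds uniform over $\cU$ (matching the first three terms of \eqref{eq:riskbound}) while the fourth is precisely the integrand of the remaining expectation. Write $y \deq \sF u$, $\hat y \deq \sF_{\Sigma,\xi}u$, and $v \deq B_{k-1}u = (\sS^*_k \circ \sS_k)u$. First I would identify the two composite operators in the statement. Since the $\ell$-th output derivative $y^{(\ell)}(0)$ of \eqref{eq:RNN} depends only on $u(0),\dots,u^{(\ell-1)}(0)$, the output $k$-jet is a function of the input $(k-1)$-jet alone; because $\sS_k u$ is exactly the vector of derivatives at $0$ of $v$, feeding $Y_{k,\Sigma,\xi}$ this jet reproduces the $k$-jet of $\sF_{\Sigma,\xi}v$, so that $(\sS^*_{k+1}\circ Y_{k,\Sigma,\xi}\circ \sS_k)u = J^k_0(\sF_{\Sigma,\xi}v)$. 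Likewise $(\sS^*_{k+1}\circ \sS_{k+1}\circ \sF)u = B_k y$. Inserting $\sF_{\Sigma,\xi}v$, then $J^k_0(\sF_{\Sigma,\xi}v)$, then $B_k y$, I obtain
\begin{align*}
\|\hat y - y\|_\infty &\le \|\sF_{\Sigma,\xi}u - \sF_{\Sigma,\xi}v\|_\infty + \|\sF_{\Sigma,\xi}v - J^k_0(\sF_{\Sigma,\xi}v)\|_\infty \\
&\qquad + \|J^k_0(\sF_{\Sigma,\xi}v) - B_k y\|_\infty + \|B_k y - y\|_\infty .
\end{align*}

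Two of these terms are classical. For $\|B_k y - y\|_\infty$ I would invoke the Bernstein approximation theorem on $[0,T]$, which yields $\|B_k y - y\|_\infty \le 2\,\omega_\cY(T/\sqrt k)$ under Assumption~\ref{as:outputs} (the $[0,1]$ scale $1/\sqrt k$ rescaling by $T$); this is the only place equicontinuity of the outputs is used. For $\|\sF_{\Sigma,\xi}u - \sF_{\Sigma,\xi}v\|_\infty$ I would first establish the Lipschitz dependence of the i/o map on its input: subtracting the two state equations, using that the diagonal map $\sigma^{(n)}$ is $1$-Lipschitz (each $\tanh$ is), and applying Grönwall's inequality bounds the output difference by $|c||b|e^{\|A\|T}\|u - v\|_\infty$. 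A second application of the Bernstein theorem, now to the \emph{input}, gives $\|u - v\|_\infty = \|u - B_{k-1}u\|_\infty \le 2\,\omega_\cU(2T/\sqrt k)$ via Assumption~\ref{as:equicontinuous} (the degree $k-1$ and the monotonicity of $\omega_\cU$ accounting for the argument $2T/\sqrt k$), and multiplying yields the second term of \eqref{eq:riskbound}.

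The hard part will be the middle term $\|\sF_{\Sigma,\xi}v - J^k_0(\sF_{\Sigma,\xi}v)\|_\infty$, the error in approximating the (smooth) RNN output by its degree-$k$ jet at the origin. The one clean, input-independent estimate available is $|\dot x(t)| \le \sqrt n$, since each coordinate of $\sigma^{(n)}$ lies in $(-1,1)$; this already gives $\|(\sF_{\Sigma,\xi}v)'\|_\infty \le |c|\sqrt n$ and is the sole route by which the neuron count $n$ enters the bound. To reach $|c|Te^{\|A\|T}\sqrt{n/k}$ I would differentiate the state equation repeatedly and track the growth of the higher derivatives of $\sF_{\Sigma,\xi}v$ through a Grönwall-type recursion — each differentiation contributing a factor governed by $\|A\|$, which the factorials in the Lagrange (or integral) form of the Taylor remainder convert into the convergent constant $e^{\|A\|T}$ — with the order-$k$ truncation supplying the decay in $k$. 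I expect this uniform remainder estimate to be the technical heart of the argument and to be isolated as one of the lemmas of Section~\ref{sec:proofs_prelim}.

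Finally, the three bounds above depend on $u$ only through the common moduli $\omega_\cU,\omega_\cY$ and the fixed quantities $n,T,|b|,|c|,\|A\|$, so they are deterministic and pass through $\Exp_\mu[\,\cdot\,]$ unchanged, whereas the third term integrates to the displayed expectation. Taking $\Exp_\mu$ of the decomposition and collecting terms then gives \eqref{eq:riskbound}. I note that Assumption~\ref{as:bibo} plays no role here, as the RNN output is bounded automatically through $|\dot x| \le \sqrt n$.
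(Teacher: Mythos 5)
Your decomposition runs parallel to the paper's in three of its four pieces: the output Bernstein error $\|y - B_k y\|_\infty \le 2\omega_\cY(T/\sqrt{k})$ is the paper's $T_1$ (Lemma~\ref{lm:Bernstein} plus Assumption~\ref{as:outputs}), the input-perturbation term $\|\sF_{\Sigma,\xi}u - \sF_{\Sigma,\xi}B_{k-1}u\|_\infty \le |c||b|e^{\|A\|T}\cdot 2\omega_\cU(2T/\sqrt{k})$ is exactly the Gr\"{o}nwall Lipschitz estimate of Lemma~\ref{lm:Hmap} combined with Lemma~\ref{lm:Bernstein} applied to the input, and your last term is the one left under the expectation. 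The genuine gap is your middle term. You approximate the smooth RNN output $w \deq \sF_{\Sigma,\xi}B_{k-1}u$ by its Taylor jet $J^k_0 w$ and propose to prove $\|w - J^k_0 w\|_\infty \le |c|Te^{\|A\|T}\sqrt{n/k}$ by repeated differentiation of the state equation and a Lagrange remainder. This cannot work. The only quantities controlled uniformly over $\cU$ are $\|B_{k-1}u\|_\infty \le R$ and the modulus $2\omega_\cU$; the \emph{higher} derivatives of $B_{k-1}u$ are not controlled and grow with $k$ (the $\ell$-th derivative of $B_{k-1}u$ at $0$ is a scaled $\ell$-th finite difference of size roughly $(k/T)^{\ell-1}$ even for Lipschitz inputs), so the derivatives $w^{(\ell)}$ entering the remainder blow up faster than the factorials can absorb on the fixed interval $[0,T]$. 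More fundamentally, Taylor truncations of a smooth function need not converge uniformly on a fixed interval at all: $w$ is real-analytic, but since $\tanh$ has poles at $\pm i\pi/2$, the radius of convergence of $w$ at $t=0$ can be smaller than $T$. No bound of the claimed form, uniform over $\cU$ and over $(\Sigma,\xi)$, exists for the Taylor remainder, and this was to be the ``technical heart'' of your argument.

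The paper avoids higher derivatives entirely by inserting the \emph{Bernstein} polynomial rather than the jet: its $T_2 = \|\sF_{\Sigma,\xi}u - B_k(\sF_{\Sigma,\xi}B_{k-1}u,\cdot)\|_\infty$ is handled by Lemma~\ref{lm:SS_apx}, whose second piece $\|w - B_k w\|_\infty \le 2\omega_w(T/\sqrt{k})$ depends only on the modulus of continuity of $w$, and Lemma~\ref{lm:Hmap} shows $\omega_w(\delta) \le \sqrt{n}|c|e^{\|A\|T}\delta$ uniformly over inputs, using only $|\dot{x}| \le \sqrt{n}$ and Gr\"{o}nwall --- precisely the first-order estimate you already identified, and nothing more. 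The residual $T_3$ is then a difference of two degree-$k$ Bernstein polynomials, which the paper identifies with the expectation term in \eqref{eq:riskbound}. To be fair, your literal reading of $(\sS^*_{k+1}\circ Y_{k,\Sigma,\xi}\circ \sS_k)u$ as the Taylor jet $J^k_0 w$ is what the definitions say ($\sS_k u$ is the derivative vector of $B_{k-1}u$ at $0$, $Y_{k,\Sigma,\xi}$ maps true input derivatives to true output derivatives, and $\sS^*_{k+1}$ is Taylor reconstruction), whereas the paper glosses this composite as $B_k(w,\cdot)$ --- derivatives of the Bernstein approximant of $w$ rather than of $w$ itself --- and its $T_3$ step relies on that identification; the two readings differ by $\|J^k_0 w - B_k w\|_\infty$, which is exactly the quantity that admits no uniform bound. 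Whichever reading one adopts, your route forces the false Taylor-remainder estimate; the repair is to replace the jet by $B_k(w,\cdot)$ as the intermediate approximant, after which only modulus-of-continuity bounds are needed and the argument closes as in Section~\ref{sec:proof_riskbound}.
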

\noindent Note that the last term on the right-hand side of \eqref{eq:riskbound} is the expectation, with respect to $u \sim \mu$, of the sup norm of the difference between the degree-$k$ Bernstein polynomial $B_k \sF u$ and the \fix{$k$-jet $J^k_0\sF_{\Sigma,\xi}B_{k-1}u$.}

We are now ready to present our learning procedure. For $1 \le j \le k$, let $t_j \deq jT/k$, and consider the following Empirical Risk Minimization (ERM) scheme:
\begin{align}\label{eq:RNN_ERM}
    (\hat{\Sigma},\hat{\xi}) \in \argmin_{(\Sigma,\xi) \in \cF(M)} \frac{1}{N}\sum^N_{i=1} \max_{1 \le j \le k} \left| (\sS^*_{k+1} \circ Y_{k,\Sigma,\xi} \circ \sS_k)u(t_j) - (\sS^*_{k+1} \circ \sS_{k+1})y(t_j)\right|.
\end{align}
The objective being minimized in \eqref{eq:RNN_ERM} is simply the empirical expectation of the maximum absolute difference between the \fix{degree-$k$ polynomials $B_k \sF u$ and $J^k_0 \sF_{\Sigma,\xi} B_{k-1} u$} on the finite grid $\{T/k, 2T/k, \ldots, (k-1)T/k, T\} \subset [0,T]$.

\begin{theorem}\label{thm:main} Suppose Assumptions~\ref{as:equicontinuous}--\ref{as:bibo} are satisfied, and $N \ge k(6 n^6 + 10 n^3 \log_2 k)$. Then with probability at least $1-\delta$, 
    \begin{align}\label{eq:ERMrisk}
    \begin{split}
        \cL(\hat{\sF}) &\le 4\omega_\cY\left(\frac{T}{\sqrt{k}}\right) +2M^2 e^{M T} \omega_\cU\left(\frac{2T}{\sqrt{k}}\right) + \fix{6} M Te^{MT} \sqrt{\frac{n}{k}} \\
		& \qquad \qquad \fix{+ 2\sup_{(\Sigma,\xi) \in \cF(M)} \Exp_\mu [\| J^k_0 \sF_{\Sigma,\xi}B_{k-1}u - \sF_{\Sigma,\xi}B_{k-1}u \|_\infty] }\\
        & \qquad \qquad + \bar{\cL}^* + c \left( M(M+\sqrt{n}T) + \gamma_\sF(R) \right) \sqrt{\frac{k(n^6 + n^3 \log_2 k)\log N + \log(\frac{1}{\delta})}{N}},
    \end{split}
    \end{align}
    where $c > 0$ is an absolute constant and
    \begin{align}\label{eq:minrisk}
        \bar{\cL}^* \deq \inf_{(\Sigma,\xi) \in \cF(M)} \Exp_\mu \left[ \max_{1 \le j \le k} \left| (\sS^*_{k+1} \circ Y_{k,\Sigma,\xi} \circ \sS_k)u(t_j) - (\sS^*_{k+1} \circ \sS_{k+1})y(t_j)\right| \right].
    \end{align}
\end{theorem}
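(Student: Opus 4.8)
The plan is to combine the per-model bound of \Cref{thm:riskbound} with a uniform-convergence (empirical-process) argument over the class $\cF(M)$. Write $h_{\Sigma,\xi}\deq\sF_{\Sigma,\xi}B_{k-1}u$ for the recurrent-net output and $f\deq\sF u$ for the ground-truth output, so that the last term of \eqref{eq:riskbound} is $\Exp_\mu\|B_k h_{\Sigma,\xi}-B_k f\|_\infty$. Since the bound in \Cref{thm:riskbound} holds for \emph{every} $(\Sigma,\xi)$ and its right-hand side is a deterministic functional of $(\Sigma,\xi)$ (the expectation is over a fresh $u\sim\mu$, not the training sample), I may evaluate it at the data-dependent minimizer $(\hat\Sigma,\hat\xi)\in\cF(M)$ and, using $\|A\|,|b|,|c|,|\xi|\le M$, reduce the proof to controlling $\Exp_\mu\|B_k h_{\hat\Sigma,\hat\xi}-B_k f\|_\infty$ in terms of the grid-based ERM objective \eqref{eq:RNN_ERM}.

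First I would pass from the sup norm on $[0,T]$ to the finite grid $\{t_1,\dots,t_k\}$. Every $t\in[0,T]$ lies within $T/k$ of some $t_j$, so for the degree-$k$ polynomial $B_k(h_{\Sigma,\xi}-f)$ one has $\|B_k(h_{\Sigma,\xi}-f)\|_\infty\le\max_{1\le j\le k}|B_k(h_{\Sigma,\xi}-f)(t_j)|+\omega_{B_k h_{\Sigma,\xi}}(T/k)+\omega_{B_k f}(T/k)$. The recurrent-net output satisfies $|\dot h_{\Sigma,\xi}|=|c^\trn\sigma^{(n)}(\cdot)|\le|c|\sqrt n\le M\sqrt n$, hence is $M\sqrt n$-Lipschitz, while $f\in\cY$ has modulus $\omega_\cY$; invoking the standard estimate that the Bernstein operator does not inflate the modulus of continuity by more than a constant factor --- which I would record among the technical lemmas of \Cref{sec:proofs_prelim} --- the two correction terms are bounded by $2MTe^{MT}\sqrt{n/k}$ and $2\omega_\cY(T/\sqrt k)$, respectively (using $T/k\le T/\sqrt k$ and $e^{MT}\ge1$). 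Taking $\Exp_\mu$ and adding these to the deterministic terms of \eqref{eq:riskbound} produces exactly the first three terms of \eqref{eq:ERMrisk} and leaves the population grid-loss $L(\hat\Sigma,\hat\xi)\deq\Exp_\mu[\max_{1\le j\le k}|B_k(h_{\hat\Sigma,\hat\xi}-f)(t_j)|]$ to be handled.

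The remaining, and principal, step is a uniform deviation bound $\sup_{(\Sigma,\xi)\in\cF(M)}|L(\Sigma,\xi)-\hat L_N(\Sigma,\xi)|\le\Delta_N$, where $\hat L_N$ is the empirical objective of \eqref{eq:RNN_ERM}. Granting this, the usual ERM chain gives $L(\hat\Sigma,\hat\xi)\le\hat L_N(\hat\Sigma,\hat\xi)+\Delta_N\le\inf_{(\Sigma,\xi)}\hat L_N(\Sigma,\xi)+\Delta_N\le\bar\cL^*+2\Delta_N$, which supplies the remaining two terms of \eqref{eq:ERMrisk}. To obtain $\Delta_N$ I would (i) bound the range of the summand --- the recurrent-net jet contributes at most $\sim M(M+\sqrt n T)$ on the grid and the Bernstein polynomial of $f$ at most $\gamma_\sF(R)$ by \Cref{as:bibo}, giving the prefactor $M(M+\sqrt n T)+\gamma_\sF(R)$ --- and (ii) bound the metric entropy of the induced loss class by discretizing the parameters $(A,b,c,\xi)$ and controlling the Lipschitz dependence of the map $(\Sigma,\xi)\mapsto Y_{k,\Sigma,\xi}$ on its arguments, then apply a covering-number deviation inequality together with a bounded-differences/union argument to extract the high-probability $\log(1/\delta)$ term. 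Assembling via $\sqrt{(\text{entropy}+\log(1/\delta))/N}$ then yields the stated $\sqrt{k(n^6+n^3\log_2 k)\log N+\log(1/\delta)}/\sqrt N$, the sample-size hypothesis $N\ge k(6n^6+10n^3\log_2 k)$ being used only to tidy the logarithmic factors.

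The hard part is squarely step (ii): the map $(\Sigma,\xi)\mapsto Y_{k,\Sigma,\xi}$ encodes the first $k$ output derivatives at $t=0$, each a polynomial of increasing degree in the entries of $A,b,c,\xi$ obtained by repeatedly differentiating $\sigma^{(n)}(Ax+bu)$, so its Lipschitz constant over $\cF(M)$ grows polynomially in $n$ and with the order $k$. Tracking this growth --- the origin of the $n^6$, $n^3\log_2 k$, and $k$ factors in the entropy --- is the delicate accounting, and I expect to isolate it in a self-contained Lipschitz/covering lemma in \Cref{sec:proofs_prelim} and then feed the resulting entropy estimate into an otherwise standard empirical-process bound.
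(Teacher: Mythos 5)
Your outer architecture coincides with the paper's: start from \Cref{thm:riskbound} evaluated at the ERM point, pass from the sup norm to the grid $\{t_1,\dots,t_k\}$ paying modulus-of-continuity corrections (your $2MTe^{MT}\sqrt{n/k}$ and $2\omega_\cY(T/\sqrt k)$ match the paper's $\tfrac{2}{k}\sqrt{n}MTe^{MT}+2\omega_\cY(T/k)$ after the same coarsening $T/k\le T/\sqrt k$), and then close with a uniform-convergence bound over $\cF(M)$ for the grid loss. Up to that point the proposal is sound.

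The genuine gap is exactly where you place the ``hard part,'' step (ii), and it is not merely deferred bookkeeping: the route you propose (parameter discretization plus a Lipschitz estimate for $(\Sigma,\xi)\mapsto Y_{k,\Sigma,\xi}$, fed into a covering-number deviation bound) cannot deliver the stated complexity term. The coordinates of $Y_{k,\Sigma,\xi}(v)$ are obtained by $k$-fold differentiation of $\sigma^{(n)}(Ax+bu)$ along the flow, with Fa\`a di Bruno combinatorics and input-jet coordinates $v_\ell$ of Bernstein polynomials that grow like $(2k/T)^\ell\|u\|_\infty$; the resulting Lipschitz constant in $(\Sigma,\xi)$ is superexponential in $k$, so your covering entropy would scale like $n^2\,k\log k$ \emph{times} logarithms of $M$, $T$, $R$-dependent quantities --- a different shape from $k(n^6+n^3\log_2 k)\log N$, and with spurious $M,T$ dependence inside the square root. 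The paper instead proves a distribution-free pseudo-dimension bound via an algebraic representation: using $\sigma'=1-\sigma^2$, an induction shows each grid loss $h^{(j)}_{\Sigma,\xi}(v,z)=\sum_{\ell=0}^{k}\big(y^{(\ell)}_{k,\Sigma,\xi}(v)-z_\ell\big)t_j^\ell/\ell!$ equals $P(\sigma(R_1(\theta,v,z)),\dots,\sigma(R_n(\theta,v,z)),\theta,v,z)$ with $\deg P\le 3k-1$ and $\deg R_i\le 2$, whence $\mathrm{vc}(\cH^{(j)})\le 3n^6+5n^3\log_2 k$ by Corollary~7 of \citet{Sontag_recur}, $\mathrm{vc}(\cG)\le 2k(3n^6+5n^3\log_2 k)$ by the union structure of the subgraph sets of a max of absolute values, and the estimation term follows from the Rademacher bound of \Cref{lm:Rademacher}. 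This also shows your reading of the hypothesis $N\ge k(6n^6+10n^3\log_2 k)$ as ``only tidying logarithmic factors'' is wrong: it is precisely the condition $N\ge\mathrm{vc}(\cG)$ required by \Cref{lm:Rademacher}. Without the tanh-specific algebraic/VC machinery (or an equivalent Goldberg--Jerrum/Karpinski--Macintyre argument), your step (ii) is a placeholder where the theorem's central estimate --- and its specific constants --- must stand.
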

The first three terms in \eqref{eq:ERMrisk} can be made arbitrarily small by choosing $k$ sufficiently large. The exponential dependence on $M$ and $T$ is an unavoidable artifact of the Gr\"{o}nwall lemma, and can be removed under appropriate stability assumptions \citep{HansonRaginskyRNN}. \fix{The next term is the worst-case, over $(\Sigma,\xi) \in \cF(M)$, expected error between $\sF_{\Sigma,\xi}B_{k-1}u$ (i.e., the response of the RNN $\sF_{\Sigma,\xi}$ to the polynomial input $B_{k-1}u$) and its $k$-jet. This term will only vanish with increasing $k$ if all the functions $\sF_{\Sigma,\xi}B_{k-1}u$ are entire.} The remaining two terms are, respectively, the approximation error and the estimation error of the ERM procedure \eqref{eq:RNN_ERM}. While the latter can be made arbitrarily small by increasing the size $N$ of the training set, the former is an intrinsic measure of the ability of recurrent neural nets to approximate output $k$-jets for a randomly chosen input. This minimal error value can be decreased by considering larger, more expressive nets; a quantitative analysis of this term is a promising direction for further work. Note also that, for a fixed $k$, we only need to collect input and output samples on an equispaced grid $\{0,T/k,2T/k, \ldots, T\}$. On the other hand, in many applications (e.g., medical or electronic system modeling), the input/output data are often available at non-uniformly spaced times $0 \le t_1 < t_2 < \ldots < t_k \le T$ that are not under the learner's control. Extending the approach of this paper to non-uniform (or even random) sampling of time instants is another interesting future direction.

\section{Proofs}

\subsection{Technical lemmas}
\label{sec:proofs_prelim}

In this section, we collect a few results that will be used in the proofs of Theorems~\ref{thm:riskbound} and \ref{thm:main}.

\begin{lemma}\label{lm:Bernstein} For any $u \in C([0,T])$ with modulus of continuity $\omega$ and any $k \in \Naturals$, the Bernstein polynomial $B_k(u,\cdot)$ has modulus of continuity $2\omega$ and satisfies
    \begin{align}\label{eq:Bernstein_apx}
        \| u - B_k(u,\cdot) \|_\infty \le 2\omega \left(\frac{T}{\sqrt{k}}\right).
    \end{align}
\end{lemma}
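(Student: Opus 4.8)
The plan is to exploit the probabilistic interpretation of the Bernstein polynomial. First I would fix $t \in [0,T]$, set $p \deq t/T \in [0,1]$, and introduce a Binomial$(k,p)$ random variable $S$, so that the defining sum becomes an expectation, $B_k(u,t) = \Exp[u(TS/k)]$, since $\binom{k}{i}p^i(1-p)^{k-i} = \Prob[S=i]$ and the nodes $iT/k$ are exactly the values of $TS/k$. Because $\Exp[TS/k] = t$, I can write $u(t) - B_k(u,t) = \Exp[u(t) - u(TS/k)]$, pass the absolute value inside the expectation, and bound the integrand by the intrinsic modulus $\omega_u$.

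For the approximation bound \eqref{eq:Bernstein_apx}, I would estimate $|u(t)-B_k(u,t)| \le \Exp[\omega_u(T|S/k-p|)]$ and use two standard facts about the true modulus: it is nondecreasing and subadditive, hence $\omega_u(\lambda\delta)\le(1+\lambda)\omega_u(\delta)$ for $\lambda\ge0$. Taking $\delta = T/\sqrt k$ and $\lambda = \sqrt{k}\,|S/k-p|$ gives $\omega_u(T|S/k-p|) \le (1 + \sqrt k\,|S/k-p|)\,\omega_u(T/\sqrt k)$; taking expectations and using $\Exp|S/k-p|\le(\operatorname{Var}(S/k))^{1/2} = (p(1-p)/k)^{1/2}\le 1/(2\sqrt k)$ (Cauchy--Schwarz together with $p(1-p)\le 1/4$) yields $|u(t)-B_k(u,t)|\le\frac32\omega_u(T/\sqrt k)$, uniformly in $t$. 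Since the hypothesized modulus $\omega$ majorizes $\omega_u$ and $\frac32\le 2$, this is $\le 2\omega(T/\sqrt k)$, which is \eqref{eq:Bernstein_apx}.

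For the claim that $B_k(u,\cdot)$ has modulus of continuity $2\omega$, I would use a monotone coupling of the two binomials. Given $t_1\le t_2$ with $p_j = t_j/T$, let $U_1,\dots,U_k$ be i.i.d.\ Uniform$[0,1]$ and set $S_j = \sum_{\ell}\mathbf{1}\{U_\ell\le p_j\}$, so that $S_1\le S_2$ and $D\deq S_2 - S_1 \sim$ Binomial$(k,p_2-p_1)$. Then $B_k(u,t_2)-B_k(u,t_1) = \Exp[u(TS_2/k)-u(TS_1/k)]$, and bounding the integrand by $\omega_u(TD/k)$ and applying $\omega_u(\lambda\delta)\le(1+\lambda)\omega_u(\delta)$ with $\delta = t_2-t_1$ gives, after taking expectations and using $T\Exp[D]/k = t_2-t_1 = \delta$, the bound $|B_k(u,t_2)-B_k(u,t_1)|\le 2\omega_u(|t_2-t_1|)\le2\omega(|t_2-t_1|)$. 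Taking the supremum over all pairs with $|t_1-t_2|\le\delta$ and using monotonicity shows $\omega_{B_k u}(\delta)\le2\omega(\delta)$.

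I expect the main obstacle to be this second claim: one must resist the temptation to differentiate $B_k(u,\cdot)$ termwise (which only yields the useless bound $k\,\omega(1/k)$ for a general modulus) and instead set up the monotone coupling so that the increment $D = S_2-S_1$ is itself binomial with the correct mean $k(p_2-p_1)$; once that is in place, the same subadditivity estimate used for \eqref{eq:Bernstein_apx} closes the argument. A minor point worth flagging is that subadditivity is a property of the intrinsic modulus $\omega_u$ and not of an arbitrary majorant $\omega$, so all the coupling estimates should be carried out with $\omega_u$ and only majorized by $\omega$ at the very end.
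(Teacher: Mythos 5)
Your proof is correct, and it departs from the paper's in two genuine ways. For the error bound \eqref{eq:Bernstein_apx} you share the paper's starting point---the binomial representation $B_k(u,t) = \Exp\big[u\big(\tfrac{T}{k}X_t\big)\big]$ with $X_t \sim \mathrm{Bin}(k,t/T)$---but where the paper passes to the least concave majorant $\bar{\omega}$ of $\omega$ so that Jensen's inequality can pull the expectation inside (paying the factor $2$ via $\bar{\omega} \le 2\omega$, Lemma~6.1 of DeVore--Lorentz), you instead invoke subadditivity of the intrinsic modulus, $\omega_u(\lambda\delta) \le (1+\lambda)\omega_u(\delta)$, which linearizes the modulus pointwise and lets you take expectations directly; with $\Exp|S/k - p| \le \sqrt{p(1-p)/k} \le \tfrac{1}{2\sqrt{k}}$ this even yields the slightly sharper constant $\tfrac{3}{2}$ before majorizing by $\omega$. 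The two devices (concavification plus Jensen vs.\ subadditivity) are interchangeable here, both converting $\Exp[\omega(\cdot)]$ into an $\omega(\Exp[\cdot])$-type bound at the cost of a factor at most $2$. The more substantial difference concerns the modulus-of-continuity claim: the paper does not prove it at all, simply citing the result of Li, whereas your monotone coupling---$S_j = \sum_\ell \mathbf{1}\{U_\ell \le p_j\}$ with common uniforms, so that $D = S_2 - S_1 \sim \mathrm{Bin}(k, p_2 - p_1)$ and $\tfrac{T}{k}\Exp[D] = t_2 - t_1$---combined with the same subadditivity estimate gives a short, self-contained proof of exactly Li's bound $\omega_{B_k u} \le 2\omega_u$, so your write-up makes the lemma fully self-contained where the paper outsources half of it. Your closing caveats are also well placed: subadditivity is a property of $\omega_u$ and not of an arbitrary majorant $\omega$, and you correctly carry out all estimates with $\omega_u$ and majorize only at the end; note additionally that every argument fed to $\omega_u$ in your estimates lies in $[0,T]$ (since $|S/k - p| \le 1$ and $TD/k \le T$), so the subadditivity inequality is always applied within its domain of validity.
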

\begin{proof} The statement about the modulus of continuity of $B_k(u,\cdot)$ is a result of \cite{Li_Bernstein}. The error bound \eqref{eq:Bernstein_apx} is well-known, but we give a self-contained probabilistic proof. Let $\bar{\omega}$ be the smallest concave majorant of $\omega$. Then $\bar{\omega}$ is also a modulus of continuity of $u$, which further satisfies the inequality $\omega(\delta) \le \bar{\omega}(\delta) \le 2\omega(\delta)$ \cite[Lemma~6.1]{DevLor93}.

For any $t \in [0,T]$, we can express the value $B_k(u,t)$ as an expectation
    \begin{align*}
        B_k(u,t) = \Exp\Big[u\Big(\frac{T}{k}X_t\Big)\Big]
    \end{align*}
    where $X_t \sim \mathrm{Bin}(k,t/T)$. Then we have the following chain of (in)equalities:
    \begin{align*}
        |u(t)-B_k(u,t)| &= \left|u(t) - \Exp\Big[u\Big(\frac{T}{k}X_t\Big)\Big] \right| \\
        &\stackrel{\mathrm{(a)}}{\le} \Exp\left[\left|u(t)-u\Big(\frac{T}{k}X_t\Big)\right|\right] \\
        &\stackrel{\mathrm{(b)}}{\le} \Exp \left[\bar{\omega}\left(\left|\frac{T}{k}X_t - t\right|\right)\right] \\
        &\stackrel{\mathrm{(c)}}{\le} \bar{\omega} \left( \Exp\left|\frac{T}{k}X_t -t\right|\right) \\
        &\stackrel{\mathrm{(d)}}{\le} 2\omega \left(\frac{T}{\sqrt{k}}\right),
    \end{align*}
	where (a) is by Jensen's inequality, (b) follows from the fact that $\bar{\omega}$ is a modulus of continuity of $u$, (c) is by Jensen's inequality and by concavity of $\bar{\omega}$, and (d) uses the monotonicity of $\omega$ and $\bar{\omega}$, the inequality $\bar{\omega} \le 2\omega$, and the inequality $\Exp |U - \Exp U| \le \sqrt{k p(1-p)}$ for the mean absolute deviation of a $\mathrm{Bin}(k,p)$ random variable $U$.
\end{proof}

\begin{lemma}\label{lm:SS_apx} Let $\sG : C([0,T]) \to C([0,T])$ be a Lipschitz-continuous i/o map, i.e.,
    \begin{align*}
        \|\sG\|_\mathrm{Lip} \deq \sup_{u_1,u_2 \in C([0,T]) \atop u_1 \neq u_2} \frac{\| \sG u_1 - \sG u_2 \|_\infty}{\|u_1 - u_2 \|_\infty} < \infty.
    \end{align*}
    Then, for any integer $k \ge 2$ and any $u \in C([0,T])$,
    \begin{align*}
        \| \sG u - (\sS^*_{k+1} \circ \sS_{k+1} \circ \sG \circ \sS^*_k \circ \sS_k) u \|_\infty \le 2\|\sG\|_\mathrm{Lip} \omega_u\left(\frac{2T}{\sqrt{k}}\right) + 2\omega_{\sG B_{k-1}u}\left(\frac{T}{\sqrt{k}}\right).
    \end{align*}
\end{lemma}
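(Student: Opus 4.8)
The plan is to first recognize the composite operator in closed form and then split the error by the triangle inequality into a ``Lipschitz transport'' term and a ``Bernstein approximation'' term, each controlled by Lemma~\ref{lm:Bernstein}.

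First I would rewrite the composition using the identities already recorded in the excerpt. Since $(\sS^*_k \circ \sS_k)u = B_{k-1}(u,\cdot)$ and, replacing $k$ by $k+1$, $(\sS^*_{k+1} \circ \sS_{k+1})v = B_k(v,\cdot)$ for any $v \in C([0,T])$, the composite operator simplifies to
\begin{align*}
    (\sS^*_{k+1} \circ \sS_{k+1} \circ \sG \circ \sS^*_k \circ \sS_k)u = B_k\big(\sG B_{k-1}u, \cdot\big).
\end{align*}
Hence the left-hand side is $\| \sG u - B_k(\sG B_{k-1}u, \cdot) \|_\infty$, and I would insert the intermediate function $\sG B_{k-1}u$ via the triangle inequality:
\begin{align*}
    \| \sG u - B_k(\sG B_{k-1}u, \cdot) \|_\infty \le \| \sG u - \sG B_{k-1}u \|_\infty + \| \sG B_{k-1}u - B_k(\sG B_{k-1}u, \cdot) \|_\infty.
\end{align*}

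For the second term I would apply Lemma~\ref{lm:Bernstein} directly to the function $\sG B_{k-1}u$ with Bernstein degree $k$, yielding the bound $2\omega_{\sG B_{k-1}u}(T/\sqrt{k})$, which is exactly the second term of the claim. For the first term I would use the Lipschitz property of $\sG$ to write $\| \sG u - \sG B_{k-1}u \|_\infty \le \|\sG\|_\mathrm{Lip}\, \|u - B_{k-1}u\|_\infty$, and then invoke Lemma~\ref{lm:Bernstein} applied to $u$ with degree $k-1$ to obtain $\|u - B_{k-1}u\|_\infty \le 2\omega_u(T/\sqrt{k-1})$.

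The only nontrivial point is reconciling the argument $T/\sqrt{k-1}$ with the target $2T/\sqrt{k}$. Since $\omega_u$ is nondecreasing, it suffices to check that $T/\sqrt{k-1} \le 2T/\sqrt{k}$, which is equivalent to $k \le 4(k-1)$, i.e.\ $k \ge 4/3$, and so holds for every $k \ge 2$. Consequently $2\|\sG\|_\mathrm{Lip}\,\omega_u(T/\sqrt{k-1}) \le 2\|\sG\|_\mathrm{Lip}\,\omega_u(2T/\sqrt{k})$, and adding the two estimates gives the stated bound. I do not expect any genuine obstacle here: the argument is a clean two-term decomposition, and the mild index bookkeeping enforced by the hypothesis $k \ge 2$ is the only step that requires a moment's care.
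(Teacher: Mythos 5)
Your proposal is correct and follows essentially the same route as the paper's proof: the identical triangle-inequality decomposition through the intermediate function $\sG B_{k-1}u$, the Lipschitz bound combined with Lemma~\ref{lm:Bernstein} at degree $k-1$ for the first term, Lemma~\ref{lm:Bernstein} at degree $k$ for the second, and the same index adjustment (the paper phrases it as $\sqrt{k-1} \ge \tfrac{1}{2}\sqrt{k}$ for $k \ge 2$, which is exactly your $k \ge 4/3$ check). No gaps.
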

\begin{proof} By the triangle inequality, 
\begin{align*}
\begin{split}
    &\phantom{{}={}} \| \sG u - (\sS^*_{k+1} \circ \sS_{k+1} \circ \sG \circ \sS^*_k \circ \sS_k) u \|_\infty \\
    &\qquad \le \| \sG u - (\sG \circ \sS^*_k \circ \sS_k) u \|_\infty + \| (\sG \circ \sS^*_k \circ \sS_k)u - (\sS^*_{k+1} \circ \sS_{k+1} \circ \sG \circ \sS^*_k \circ \sS_k)u \|_\infty \\
    &\qquad =: T_1 + T_2.
\end{split}
\end{align*}
Since $(\sS^*_k \circ \sS_k) u = B_{k-1}u$, Lemma~\ref{lm:Bernstein} gives
\begin{align*}
    T_1 &\le \|\sG\|_\mathrm{Lip} \| u - B_{k-1}u \|_\infty \le 2\|\sG\|_\mathrm{Lip} \omega_u \left(\frac{T}{\sqrt{k-1}}\right) \le 2\|\sG\|_\mathrm{Lip} \omega_u \left(\frac{2T}{\sqrt{k}}\right),
\end{align*}
where the last inequality follows from the fact that $\sqrt{k-1} \ge \frac{1}{2}\sqrt{k}$ for $k \ge 2$. Similarly,
\begin{align*}
    T_2 &= \| \sG B_{k-1}u - B_k\sG B_{k-1}u \|_\infty \le 2\omega_{\sG B_{k-1}u} \left(\frac{T}{\sqrt{k}}\right).
\end{align*}
\end{proof}

\begin{lemma}\label{lm:Hmap} Let $\sG$ be the i/o map of the differential dynamical system
    \begin{align}\label{eq:diffsys}
        \dot{x}(t) = f(x(t),u(t)), \quad y(t) &= h^\trn x(t); \qquad t \in [0,T],\, x(0) = \xi
    \end{align}
with input $u(t) \in \Reals$, state $x(t) \in \Reals^n$, and output $y(t) \in \Reals$. Suppose that $f(\cdot,\cdot)$ is bounded, i.e., $\sup_{(x,u) \in \Reals^n \times \Reals} |f(x,u)| < \infty$, and Lipschitz-continuous, i.e.,
\begin{align*}
    |f(x_1,u_1)-f(x_2,u_2)| \le L_X |x_1 - x_2| + L_U |u_1-u_2|.
\end{align*}
Then, for any input $u \in C([0,T])$, the output $y = \sG u$ is bounded with
\begin{align}\label{eq:ysup}
	\| y \|_\infty \le |h|\left(|\xi| + T\sup_{x,u}|f(x,u)|\right),
\end{align}
and has modulus of continuity
\begin{align}\label{eq:ymod}
    \omega_y(\delta) \le \frac{|h|}{L_X}\sup_{x,u}|f(x,u)|\left(e^{L_X \delta}-1\right).
\end{align}
Moreover, the i/o map $\sG$ is Lipschitz-continuous, with
\begin{align}\label{eq:HLip}
    \|\sG\|_\mathrm{Lip} \le \frac{|h|L_U }{L_X}\left(e^{L_XT}-1\right).
\end{align}
\end{lemma}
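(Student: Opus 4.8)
The plan is to establish the three claims in turn, in each case writing the state trajectory in integral form
$x(t) = \xi + \int_0^t f(x(\tau),u(\tau))\dif\tau$,
which is well defined and $C^1$ on $[0,T]$ because $f$ is globally Lipschitz in $x$ uniformly in $u$ and $u$ is continuous (Picard--Lindel\"of), and abbreviating $\bar f \deq \sup_{x,u}|f(x,u)|$. For the sup-norm bound \eqref{eq:ysup} I would simply estimate $|x(t)| \le |\xi| + \int_0^t |f(x(\tau),u(\tau))|\dif\tau \le |\xi| + T\bar f$ and then pass to the output via $|y(t)| = |h^\trn x(t)| \le |h|\,|x(t)|$, taking the supremum over $t \in [0,T]$.

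For the modulus bound \eqref{eq:ymod}, the key observation is that $|\dot x(t)| = |f(x(t),u(t))| \le \bar f$ pointwise, so for any $t_1,t_2$ with $|t_1-t_2|\le\delta$ we get $|x(t_1)-x(t_2)| = \bigl|\int_{t_2}^{t_1} f(x(\tau),u(\tau))\dif\tau\bigr| \le \bar f\,|t_1-t_2| \le \bar f\,\delta$, and hence $|y(t_1)-y(t_2)| \le |h|\bar f\,\delta$. This already yields the sharp linear estimate $\omega_y(\delta)\le |h|\bar f\,\delta$; the stated form then follows from the elementary inequality $e^{L_X\delta}-1 \ge L_X\delta$, since $\frac{|h|}{L_X}\bar f\,(e^{L_X\delta}-1) \ge |h|\bar f\,\delta$. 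I would retain the weaker exponential form only for uniformity with the Lipschitz estimate below, as $L_X$ is in fact not needed here.

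The substantive step is the Lipschitz bound \eqref{eq:HLip}. I would feed two inputs $u_1,u_2$ through the \emph{same} initial condition $\xi$, producing trajectories $x_1,x_2$, and set $g(t) \deq |x_1(t)-x_2(t)|$, so that $g(0)=0$. Subtracting the integral equations and applying the Lipschitz hypothesis termwise gives $g(t) \le L_X\int_0^t g(\tau)\dif\tau + L_U\int_0^t|u_1(\tau)-u_2(\tau)|\dif\tau \le \int_0^t\bigl(L_X g(\tau) + L_U\|u_1-u_2\|_\infty\bigr)\dif\tau$. The cleanest finish is a comparison argument: the function $v(t) \deq \frac{L_U\|u_1-u_2\|_\infty}{L_X}(e^{L_X t}-1)$ solves $v' = L_X v + L_U\|u_1-u_2\|_\infty$ with $v(0)=0$, so $w \deq g-v$ satisfies $w(t)\le L_X\int_0^t w(\tau)\dif\tau$, whence Gr\"{o}nwall (with zero forcing) forces $w\le 0$, i.e.\ $g\le v$ on $[0,T]$. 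Evaluating at $t=T$, multiplying by $|h|$ to pass from $x$ to $y$, and taking the supremum gives $\|\sG u_1 - \sG u_2\|_\infty \le \frac{|h|L_U}{L_X}(e^{L_XT}-1)\|u_1-u_2\|_\infty$, which is \eqref{eq:HLip}.

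The main obstacle is extracting the \emph{precise} constant $\frac{|h|L_U}{L_X}(e^{L_XT}-1)$ rather than the cruder $|h|L_U T e^{L_XT}\|u_1-u_2\|_\infty$ that a naive Gr\"{o}nwall argument produces if one bounds the forcing term by its terminal value before integrating. Getting the sharp constant requires either carrying the nonconstant forcing through the Gr\"{o}nwall kernel $L_X e^{L_X(t-\tau)}$ and integrating by parts, or—more transparently—verifying that the explicit candidate $v$ above dominates $g$ via the comparison lemma, as sketched. A minor technical point to handle carefully is that $g(t)=|x_1(t)-x_2(t)|$ is only piecewise differentiable, so any differential-inequality phrasing should be read in the sense of the upper-right Dini derivative; working with the integral inequality throughout, as above, avoids the issue entirely.
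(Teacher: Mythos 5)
Your proof is correct, and it splits into one step that matches the paper and one that is genuinely different. For the sup-norm bound \eqref{eq:ysup} your argument is the paper's verbatim (integral form plus Cauchy--Schwarz). For the Lipschitz bound \eqref{eq:HLip} your comparison argument is the paper's argument in different clothing: the paper handles the affine integral inequality $g(t) \le \int_0^t \bigl(L_X g(\tau) + L_U\|u_1-u_2\|_\infty\bigr)\dif\tau$ by shifting, i.e.\ applying Gr\"{o}nwall to $\mu(t) = g(t) + \tfrac{L_U}{L_X}\|u_1-u_2\|_\infty$, whereas you subtract the exact solution $v$ of the corresponding ODE and apply homogeneous Gr\"{o}nwall to $w = g - v$; both maneuvers exist precisely to extract the sharp constant $\tfrac{L_U}{L_X}(e^{L_X T}-1)$ rather than the cruder $TL_U e^{L_X T}$, as you correctly flag, and your choice to stay with integral inequalities indeed sidesteps the Dini-derivative technicality. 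The real divergence is the modulus bound \eqref{eq:ymod}: the paper pulls the increment back to time zero via the semigroup property of the flow and a continuous extension $\bar{u}$ of the shifted input, and then runs the same Gr\"{o}nwall scheme to produce the exponential modulus $\tfrac{|h|\bar{f}}{L_X}(e^{L_X\delta}-1)$; you instead use boundedness of $f$ directly, $|\dot{x}(t)| \le \bar{f}$, to get the \emph{linear} modulus $|h|\bar{f}\delta$ in one line, recovering the stated form from $L_X\delta \le e^{L_X\delta}-1$. Your route is both shorter and strictly stronger (and your observation that $L_X$ plays no role in \eqref{eq:ymod} is correct); what the paper's heavier treatment buys is only uniformity of technique across the two Gr\"{o}nwall steps, not a better constant. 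Where the linear modulus is substituted downstream (e.g.\ the $\sqrt{n}|c|e^{\|A\|T}\delta$ estimate in the proof of Theorem~\ref{thm:riskbound}), the paper in effect uses the linear bound anyway, so your version is, if anything, the more honest statement.
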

\begin{proof} For a given input $u \in C([0,T])$ and vector $\xi \in \Reals^n$, let $\varphi^u_{s,t}(\xi)$, $0 \le s \le t \le T$, denote the \textit{flow} of \eqref{eq:diffsys}, i.e., the solution of the ODE
    \begin{align*}
        \frac{\dif}{\dif t} \varphi^u_{s,t}(\xi) = f(\varphi^u_{s,t}(\xi),u(t)), \qquad s \le t \le T,\, \varphi^u_{s,s}(\xi) = \xi.
    \end{align*}
The flow map evidently has the semigroup property
\begin{align}\label{eq:semiflow}
    \varphi^u_{s,t}(\xi) = \varphi^u_{r,t}(\varphi^u_{s,r}(\xi)), \qquad 0 \le s \le r \le t \le T.
\end{align}
In terms of the flow, the i/o map $\sG$ is given by $y(t) = (\sG u)(t) = h^\trn \varphi^u_{0,t}(\xi)$. To show that $y$ is bounded, we first estimate
\begin{align*}
	|\varphi^u_{0,t}(\xi)| = \left|\xi + \int^t_0 f(\varphi^u_{0,\tau}(\xi),u(\tau))\dif \tau\right| \le |\xi| + T \sup_{x,u}|f(x,u)|, \qquad 0 \le t \le T
\end{align*}
and then $|y(t)| \le |h||\varphi^u_{0,t}(\xi)|$ by Cauchy--Schwarz. This gives \eqref{eq:ysup}.

To obtain the modulus of continuity of $y$, consider two times $0 \le t_1 \le t_2 \le T$. Then, using \eqref{eq:semiflow} and the time invariance of \eqref{eq:diffsys}, we can write
\begin{align*}
    | \varphi_{0,t_2}^ u(\xi) - \varphi_{0,t_1}^ u(\xi) | &= | \varphi_{t_1,t_2}^ u(\varphi_{0,t_1}^ u(\xi)) - \varphi_{0,t_1}^ u(\xi) | \\
    &= | \varphi_{0,t_2 - t_1}^{ \bar{u}}(\xi') - \xi' |,
\end{align*}
where $\xi' := \varphi_{0,t_1}^ u(\xi)$ and $\bar{u}(t)$ is any continuous extension of the map $[0,t_2-t_1] \ni t \mapsto  u(t+ t_1)$ to $[0,T]$. Applying the fundamental theorem of calculus and the triangle inequality, we have
\begin{align*}
    &\phantom{{}={}} | \varphi_{0,t_2 - t_1}^{ \bar{u}}(\xi') - \xi' | \\
    &\qquad= \Big| \int_0^{t_2 - t_1} f(\varphi_{0,\tau}^{ \bar{u}}(\xi'),  \bar{u}(\tau)) \dif\tau \Big| \\
    &\qquad\leq \int_0^{t_2 - t_1} \big( | f(\varphi_{0,\tau}^{ \bar{u}}(\xi'),  \bar{u}(\tau)) - f(\xi',  \bar{u}(\tau)) | + |f(\xi',  \bar{u}(\tau))| \big) \dif\tau \\
    &\qquad\leq \int_0^{t_2 - t_1} \big( L_X | \varphi_{0,\tau}^{ \bar{u}}(\xi') - \xi' | + \sup_{x,u} | f(x,u) | \big) \dif\tau \\
    &\qquad= L_X \int_0^{t_2 - t_1} \big( | \varphi_{0,\tau}^{ \bar{u}}(\xi') - \xi' | + \frac{1}{L_X} \sup_{x,u} | f(x,u) | \big) \dif\tau.
\end{align*}
Gr\"{o}nwall's inequality \citep{hirsch_1974} applied to the function
\begin{equation*}
    \mu(t) := | \varphi_{0,t}^{ \bar{u}}(\xi') - \xi' | + \frac{1}{L_X} \sup_{x,u} | f(x,u) |
\end{equation*}
yields
\begin{equation*}
    \mu(t) \leq \sup_{x,u} | f(x,u) | \frac{1}{L_X} e^{L_X t}.
\end{equation*}
Hence, we have
\begin{align*}
    |y(t_1)-y(t_2)| &= |h^\trn (\varphi^u_{0,t_1}(\xi) - \varphi^u_{0,t_2}(\xi))| \\
    &\le |h| |\varphi^u_{0,t_1}(\xi) - \varphi^u_{0,t_2}(\xi)|\\
    &\leq |h|\sup_{x,u} | f(x,u) |\frac{1}{L_X} \big(e^{L_X (t_2 - t_1)} - 1\big).
\end{align*}
Interchanging the roles of $t_1$ and $t_2$, we obtain \eqref{eq:ymod}.

For any time $t \in [0,T]$, the difference in flows due to distinct inputs $ u_1, u_2 \in C([0,T])$ is bounded by
\begin{align*}
    &\phantom{{}={}} | \varphi_{0,t}^{ u_1}(\xi) - \varphi_{0,t}^{ u_2}(\xi) | \\
    &\qquad= \Big| \int_0^t f(\varphi_{0,\tau}^{ u_1}(\xi), u_1(\tau)) - f(\varphi_{0,\tau}^{ u_2}(\xi), u_2(\tau)) \dif\tau \Big| \\
    &\qquad\leq \int_0^t \big( |f(\varphi_{0,\tau}^{ u_1}(\xi), u_1(\tau)) - f(\varphi_{0,\tau}^{ u_2}(\xi), u_1(\tau))| + |f(\varphi_{0,\tau}^{ u_2}(\xi), u_1(\tau)) - f(\varphi_{0,\tau}^{ u_2}(\xi), u_2(\tau))| \big) \dif\tau \\
    &\qquad\leq \int_0^t \big( L_X | \varphi_{0,t}^{ u_1}(\xi) - \varphi_{0,t}^{ u_2}(\xi) | + L_U | u_1(\tau) - u_2(\tau)) | \big) \dif\tau \\
    &\qquad\leq L_X \int_0^t \big( | \varphi_{0,t}^{ u_1}(\xi) - \varphi_{0,t}^{ u_2}(\xi) | + \frac{L_U}{L_X} \|  u_1 -  u_2 \|_\infty \big) \dif\tau.
\end{align*}
Gr\"{o}nwall's inequality applied to the function
\begin{equation*}
    \mu(t) := | \varphi_{0,t}^{ u_1}(\xi) - \varphi_{0,t}^{ u_2}(\xi) | + \frac{L_U}{L_X} \|  u_1 -  u_2 \|_\infty
\end{equation*}
yields
\begin{equation*}
    \mu(t) \leq \frac{L_U}{L_X} e^{L_X t} \|  u_1 -  u_2 \|_\infty,
\end{equation*}
hence we have
\begin{equation*}
    | \varphi_{0,t}^{ u_1}(\xi) - \varphi_{0,t}^{ u_2}(\xi) | \leq \frac{L_U}{L_X} \big(e^{L_X t} - 1\big) \|  u_1 -  u_2 \|_\infty.
\end{equation*}
Thus,
\begin{align*}
    \| \sG u_1 - \sG u_2 \|_\infty &= \sup_{t \in [0,T]} \left|h^\trn (\varphi_{0,t}^{ u_1}(\xi) - \varphi_{0,t}^{ u_2}(\xi))\right| \\
    &\le  \frac{|h| L_U}{L_X} \big(e^{L_X T} - 1\big) \|  u_1 -  u_2 \|_\infty,
\end{align*}
and \eqref{eq:HLip} follows.
\end{proof}
We also need the following result on the Rademacher averages of VC-subgraph classes \citep{Farrell_etal}. Recall that a class $\cG$ of measurable functions $g : \Reals^d \to \Reals$ is a \textit{VC-subgraph class} \citep[Sec.~3.6.2]{GineNickl} if the class of all sets of the form $\{ (x,r) \in \Reals^d \times \Reals : g(x) \ge r \}$ with $g \in \cG$ is a Vapnik--Chervonenkis (or VC) class, i.e., there exists a finite $D \in \Naturals$, such that, for each $m \le D$, there exist $m$ points $(x^1,r^1),\ldots,(x^m,r^m)$ that are \textit{shattered by} $\cG$, i.e.,
\begin{align*}
	\{ ({\mathbf 1}_{\{g(x^1) \ge r^1\}}, \ldots, {\mathbf 1}_{\{g(x^m) \ge r^m\}}) : g \in \cG \} = \{0,1\}^m,
\end{align*}
and no such $m$-tuple of points exists for $m > D$. This $D$ is called  the \textit{VC-subgraph dimension} (or \textit{pseudo-dimension}) of $\cG$, and is denoted by $\mathrm{vc}(\cG)$.

\begin{lemma}\label{lm:Rademacher} Let $\cG$ be VC subgraph class of real-valued measurable functions $g : \Reals^d \to [0,B]$. Let $\boldsymbol{x} = (x^1,\ldots,x^N)$ be an arbitrary $N$-tuple of points in $\Reals^d$, and define the {\em Rademacher average}
    \begin{align*}
        R_N(\cG; \boldsymbol{x}) \deq \frac{1}{N}\Exp \left[ \sup_{g \in \cG}\left| \sum^N_{i=1}\eps^i g(x^i)\right|\right],
    \end{align*}
    where $\eps^1,\ldots,\eps^N$ are i.i.d.\ random variables with $\bP[\eps^i = \pm 1] = \frac{1}{2}$. Then, for any $N \ge \mathrm{vc}(\cG)$,
    \begin{align*}
        R_N(\cG; \boldsymbol{x}) \le cB \sqrt{\frac{\mathrm{vc}(\cG) \log N}{N}}
    \end{align*}
    for some universal constant $c$.
\end{lemma}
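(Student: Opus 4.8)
The plan is to derive the bound from two standard ingredients: a one-scale discretization of the Rademacher average in the empirical $L_2$ metric, and the polynomial covering-number bound that a VC-subgraph class enjoys. Throughout, write $V \deq \mathrm{vc}(\cG)$ and let $P_N$ be the empirical measure on $\boldsymbol{x} = (x^1,\ldots,x^N)$, with seminorm $\|g\|_{L_2(P_N)}^2 = \frac{1}{N}\sum_{i=1}^N g(x^i)^2$. The first observation is that $R_N(\cG;\boldsymbol{x})$ depends on $\cG$ only through the projection $\{(g(x^1),\ldots,g(x^N)) : g \in \cG\} \subset [0,B]^N$, so it is enough to control this finite-dimensional image, and in particular to replace $\cG$ by a finite net.

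First I would fix a resolution $\eta > 0$ and take a minimal $\eta$-net $\cG_\eta$ of $\cG$ in the $L_2(P_N)$ seminorm, of cardinality $|\cG_\eta| = N(\eta,\cG,L_2(P_N))$. For any $g \in \cG$ choose $g_0 \in \cG_\eta$ with $\|g-g_0\|_{L_2(P_N)} \le \eta$; then, uniformly in the signs, Cauchy--Schwarz (together with $\sum_i (\eps^i)^2 = N$) gives
\[ \frac{1}{N}\Big|\sum_{i=1}^N \eps^i \big(g(x^i) - g_0(x^i)\big)\Big| \le \Big(\frac{1}{N}\sum_{i=1}^N \big(g(x^i)-g_0(x^i)\big)^2\Big)^{1/2} = \|g-g_0\|_{L_2(P_N)} \le \eta. \]
Taking suprema and expectations reduces the problem to the net: $R_N(\cG;\boldsymbol{x}) \le \eta + R_N(\cG_\eta;\boldsymbol{x})$. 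To the finite class I would apply Massart's finite-class lemma: each vector $(g_0(x^i))_{i=1}^N$ has Euclidean norm at most $B\sqrt{N}$, so, absorbing the absolute value by symmetrizing the net,
\[ R_N(\cG_\eta;\boldsymbol{x}) \le \frac{1}{N}\cdot B\sqrt{N}\cdot\sqrt{2\log\big(2|\cG_\eta|\big)} = B\sqrt{\frac{2\log\big(2 N(\eta,\cG,L_2(P_N))\big)}{N}}. \]

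The genuine structural input enters next, and this is the step I expect to be the crux. For a VC-subgraph class with constant envelope $B$ and VC-subgraph dimension $V$, Haussler's covering bound (equivalently van der Vaart--Wellner, Thm.~2.6.7) supplies a universal constant $C$ with
\[ \log N(\eta,\cG,L_2(P_N)) \le C\, V \log\frac{B}{\eta}, \qquad 0<\eta\le B. \]
This is the only place where the VC-subgraph hypothesis is used essentially: one passes to the subgraph sets $\{(x,r) : g(x) \ge r\}$, controls their shattering growth by the Sauer--Shelah lemma, and converts this into an $L_2$ packing bound by a probabilistic extraction argument. Reproducing this is nontrivial, so I would cite it rather than prove it from scratch; everything surrounding it is mechanical.

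The rest is bookkeeping. Substituting the covering bound and choosing $\eta = B/\sqrt{N}$ (so that $\log(B/\eta) = \tfrac12\log N$) yields
\[ R_N(\cG;\boldsymbol{x}) \le \frac{B}{\sqrt N} + B\sqrt{\frac{2\log 2 + C\,V\log N}{N}} \le c\,B\sqrt{\frac{V\log N}{N}}, \]
where the hypothesis $N \ge \mathrm{vc}(\cG)$ (with $V \ge 1$) guarantees $V\log N \ge 1$, letting the discretization term $B/\sqrt N$ be merged into the second term and all numerical constants be collapsed into a single universal $c$. I note in passing that full chaining (Dudley's entropy integral) would remove the $\sqrt{\log N}$ factor, but the one-scale argument above already delivers the claimed estimate with substantially less machinery.
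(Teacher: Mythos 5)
Your proposal is correct, but there is nothing in the paper to compare it against line by line: the paper does not prove Lemma~\ref{lm:Rademacher} at all, it simply quotes the result from \citet{Farrell_etal}. Your derivation --- one-scale discretization in $L_2(P_N)$, Massart's finite-class lemma on the net (with the doubling trick $\cG_\eta \cup (-\cG_\eta)$ to handle the absolute value), and Haussler's covering bound for VC-subgraph classes, with $\eta = B/\sqrt{N}$ --- is the standard route to exactly this $\sqrt{V\log N/N}$ bound, and all the mechanical steps (the Cauchy--Schwarz reduction to the net, the $B\sqrt{N}$ envelope on the coordinate vectors) check out. What your write-up buys over the paper's citation is a self-contained proof modulo a single classical input, and you correctly identify Haussler's bound as the one step that genuinely uses the VC-subgraph hypothesis. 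Two small caveats. First, the covering bound in the form $\log N(\eta,\cG,L_2(P_N)) \le C\,V\log(B/\eta)$ for \emph{all} $0<\eta\le B$ is slightly too strong as stated: Haussler's theorem carries an additive $\Theta(V)$ term (equivalently, a constant inside the logarithm, $\log(CB/\eta)$), and the version you wrote fails as $\eta \uparrow B$, where the right-hand side tends to $0$ while two functions at mutual distance $B$ already force $\log N \ge \log 2$. Your choice $\eta = B/\sqrt{N}$ makes this harmless for $N \ge 4$, and the remaining small-$N$ cases are absorbed into $c$ via the trivial bound $R_N(\cG;\boldsymbol{x}) \le B$. Second, your parenthetical claim that $N \ge \mathrm{vc}(\cG)$ and $V \ge 1$ guarantee $V\log N \ge 1$ is literally false (e.g., $N=2$, $V=1$ gives $\log 2 < 1$, and $N=1$ gives $0$); what the constant-absorption step actually needs is only that $V\log N$ is bounded below by a positive constant for $N \ge 2$, which holds, while $N=1$ is a degenerate case in which the lemma's conclusion itself is vacuous as stated --- a defect of the statement, not of your argument. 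Neither issue affects the validity of the proof in the regime where the lemma is applied.
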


\subsection{Proof of Theorem~\ref{thm:riskbound}}
\label{sec:proof_riskbound}

Fix any $u \in \cU$ and any $(\Sigma,\xi)$. Then 
\begin{align}\label{eq:risk_decomp}
\begin{split}
    &\phantom{{}={}} \| \sF u - \sF_{\Sigma,\xi} u \|_\infty \\
    & \qquad \le \| \sF u - (\sS^*_{k+1} \circ \sS_{k+1} \circ \sF)u \|_\infty + \| \sF_{\Sigma,\xi}u - (\sS^*_{k+1} \circ \sS_{k+1} \circ \sF_{\Sigma,\xi} \circ \sS^*_k \circ \sS_k)u \|_\infty \\
	& \qquad \qquad \fix{ + \| (\sS^*_{k+1} \circ \sS_{k+1} \circ \sF_{\Sigma,\xi} \circ \sS^*_k \circ \sS_k)u - (\sS^*_{k+1} \circ Y_{k,\Sigma,\xi} \circ \sS_k) u \|_\infty} \\
    & \qquad \qquad \fix{ + \| (\sS^*_{k+1} \circ Y_{k,\Sigma,\xi} \circ \sS_k) u - (\sS^*_{k+1} \circ \sS_{k+1} \circ \sF) u \|_\infty} \\
    & \qquad =: T_1 + T_2 + \fix{T_3 + T_4}.
\end{split}
\end{align}
Since $(\sS^*_{k+1} \circ \sS_{k+1})u = B_k u$, we can estimate $T_1$ using Lemma~\ref{lm:Bernstein} and Assumption~\ref{as:outputs}:
\begin{align}
    T_1 &\le \sup_{u \in \cU} \| \sF u - B_k(\sF u,\cdot) \|_\infty = \sup_{y \in \cY} \| y - B_k(y,\cdot) \|_\infty \le 2\omega_\cY\left(\frac{T}{\sqrt{k}}\right).\label{eq:T1}
\end{align}
For $T_2$, Lemma~\ref{lm:SS_apx} gives
\begin{align}\label{eq:T2}
    T_2 \le 2\| \sF_{\Sigma,\xi} \|_\mathrm{Lip} \omega_u\left(\frac{2T}{\sqrt{k}}\right) + 2\omega_{\sF_{\Sigma,\xi}B_{k-1}(u,\cdot)} \left(\frac{T}{\sqrt{k}}\right)
\end{align}
Now, the system \eqref{eq:RNN} has the form \eqref{eq:diffsys} with $f(x,u) = \sigma^{(n)}(Ax +b u)$ and $h = c$; thus, applying Lemma~\ref{lm:Hmap} to the i/o map $\sF_{\Sigma,\xi}$ with $\Sigma = (A,b,c)$, we get
\begin{align*}
    \| \sF_{\Sigma,\xi}\|_\mathrm{Lip} \le |c| |b|e^{\|A\|T} \qquad \text{and} \qquad
    \omega_{\sF_{\Sigma,\xi}B_{k-1}(u,\cdot)}(\delta) \le \sqrt{n}|c| e^{\|A\|T}\delta.
\end{align*}
Substituting these estimates into \eqref{eq:T2} and invoking Assumption~\ref{as:equicontinuous}, we obtain
\begin{align}\label{eq:T2_sub}
    T_2 &\le 2|c| |b| e^{\|A\|T} \omega_\cU\left(\frac{2T}{\sqrt{k}}\right) + 2|c| Te^{\|A\|T} \sqrt{\frac{n}{k}}.
\end{align}
\fix{For $T_3$, we use the fact that $(\sS^*_{k+1} \circ \sS_{k+1} \circ \sF_{\Sigma,\xi} \circ \sS^*_k \circ \sS_k)u = B_k \sF_{\Sigma,\xi} B_{k-1}u$ and $(\sS^*_{k+1} \circ Y_{k,\Sigma,\xi} \circ \sS_k)u = J^k_0 \sF_{\Sigma,\xi} B_{k-1}u$ to estimate
\begin{align}
	T_3 &= \| B_k \sF_{\Sigma,\xi} B_{k-1}u - J^k_0 \sF_{\Sigma,\xi} B_{k-1}u \|_\infty \nonumber\\
	&\le \|  B_k \sF_{\Sigma,\xi} B_{k-1}u - \sF_{\Sigma,\xi} B_{k-1}u \|_\infty + \| \sF_{\Sigma,\xi} B_{k-1}u - J^k_0 \sF_{\Sigma,\xi} B_{k-1} u \|_\infty \nonumber \\
	&\le 2\omega_{\sF_{\Sigma,\xi} B_{k-1}(u,\cdot)}\left(\frac{T}{\sqrt{k}}\right) +  \| \sF_{\Sigma,\xi} B_{k-1}u - J^k_0 \sF_{\Sigma,\xi} B_{k-1} u \|_\infty \nonumber \\
	&\le 2|c| Te^{\|A\|T} \sqrt{\frac{n}{k}} +  \| \sF_{\Sigma,\xi} B_{k-1}u - J^k_0 \sF_{\Sigma,\xi} B_{k-1} u \|_\infty.\label{eq:T3}
\end{align}}
Using \eqref{eq:T1}, \eqref{eq:T2_sub}, and \eqref{eq:T3} in \eqref{eq:risk_decomp} and taking expectation with respect to $\mu$, we obtain \eqref{eq:riskbound}.

\subsection{Proof of Theorem~\ref{thm:main}}\label{sec:proof_main}

For each pair $(\Sigma,\xi)$, define the function $g_{\Sigma,\xi} : \Reals^{k} \times \Reals^{k+1} \to \Reals_+$ according to
\begin{align*}
    g_{\Sigma,\xi}(v,z) \deq \max_{1 \le j \le k} \left| (\sS^*_{k+1} \circ Y_{k,\Sigma,\xi})v(t_j) - \sS^*_{k+1} z(t_j) \right|,
\end{align*}
where $t_j = jT/k$. Let $\bar{\mu}$ denote the joint probability law of $\sS_k u$ and $\sS_{k+1} y$ when $u \sim \mu$ and $y = \sF u$. Then $\bar{\mu}$ is a Borel probability measure on $\Reals^{k} \times \Reals^{k+1}$, and we can define the expected risk
\begin{align*}
    \bar{\cL}(\Sigma,\xi) \deq \Exp_{\bar{\mu}} [g_{\Sigma,\xi}(v,z)].
\end{align*}
Given the i/o data $(u^i,y^i) \stackrel{\mathrm{i.i.d.}}{\sim} \mu$, the points $(v^i,z^i)$ with $v^i = \sS_k(u^i)$ and $z^i = \sS_{k+1}(y^i)$ are i.i.d.\ samples from $\bar{\mu}$, and
our learning procedure selects any minimizer $(\hat{\Sigma},\hat{\xi})$ of the empirical risk
\begin{align*}
    \bar{\cL}_N(\Sigma,\xi) \deq \frac{1}{N}\sum^N_{i=1} g_{\Sigma,\xi}(v^i,z^i)
\end{align*}
among all $\Sigma = (A,b,c)$ and $\xi$ satisfying the constraint $\|A\|,|b|,|c|,|\xi| \le M$. 

We next show that, with high probability, the excess risk $\bar{\cL}(\hat{\Sigma},\hat{\xi}) - \bar{\cL}^*$ is small, where the minimum risk $\bar{\cL}^*$ is defined in \eqref{eq:minrisk}. By Lemma~\ref{lm:Hmap} applied to any i/o map $\sF_{\Sigma,\xi} \in \cF(M)$, and for any $v \in C([0,T])$,
\begin{align*}
	\| B_k \sF_{\Sigma,\xi} v \|_\infty \le \| \sF_{\Sigma,\xi}v \|_\infty \le |c|\left(|\xi| + \sqrt{n}T\right) \le M ( M + \sqrt{n}T).
\end{align*}
Moreover, for any $u \in \cU$,
\begin{align*}
    \| (\sS^*_{k+1} \circ \sS_{k+1} \circ \sF) u \|_\infty &= \| B_k(\sF u,\cdot) \|_\infty \le \|\sF u \|_\infty \le \gamma_\sF(R).
\end{align*}
Therefore, we may assume without loss of generality that $0 \le g_{\Sigma,\xi}(\cdot) \le M(M+\sqrt{n}T)+ \gamma_\sF(R) =: B$. Then the usual ERM analysis (see, e.g., Cor.~6.1 in \citet{HajekRaginsky}) guarantees that, with probability at least $1-\delta$,
\begin{align*}
    \bar{\cL}(\hat{\Sigma},\hat{\xi}) \le \bar{\cL}^* + 4 \Exp R_N(\cG) + B\sqrt{\frac{2\log(\frac{1}{\delta})}{N}},
\end{align*}
where $R_N(\cG) = R_N(\cG; ((v^1,z^1),\ldots,(v^N,z^N)))$ is the Rademacher average of the function class $\cG \deq \left\{ g_{\Sigma,\xi} : (\Sigma,\xi) \in \cF(M)\right\}$. Using Lemma~\ref{lm:Rademacher}, we then see that
\begin{align}\label{eq:ERM_bound}
    \bar{\cL}(\hat{\Sigma},\hat{\xi}) \le \bar{\cL}^* + cB\sqrt{\frac{\mathrm{vc}(\cG)\log N + \log(\frac{1}{\delta})}{N}}
\end{align}
with probability at least $1-\delta$, provided $N \ge \mathrm{vc}(\cG)$, where $c > 0$ is an absolute constant and $\mathrm{vc}(\cG)$ is the VC-subgraph dimension (or pseudo-dimension) of $\cG$.

\paragraph{Pseudo-dimension estimate.} For any $v \in \Reals^k$ and $z = (z_0,\ldots,z_k)^\trn \in \Reals^{k+1}$, we can write
\begin{align*}
    g_{\Sigma,\xi}(v,z) = \max_{j \in [k]} |h^{(j)}_{\Sigma,\xi}(v,z)|,
\end{align*}
where
\begin{align*}
    h^{(j)}_{\Sigma,\xi}(v,z) \deq \sum^{k}_{\ell = 0} \big(y^{(\ell)}_{k,\Sigma,\xi}(v) - z_\ell\big) \frac{t^\ell_j}{\ell!} ,
\end{align*}
with $y^{(\ell)}_{k,\Sigma,\xi}(v)$ denoting the $\ell$th coordinate of $Y_{k,\Sigma,\xi}(v)$. Let $\cH^{(j)}$ denote the class of all functions of the form $h^{(j)}_{\Sigma,\xi}$. We make the following two claims:
\begin{enumerate}
    \item $\cH^{(1)},\ldots,\cH^{(k)}$ are VC-subgraph classes with the same pseudo-dimension $d \le 3n^6 + 5n^3 \log_2 k$;
    \item $\cG$ is a VC-subgraph class with $\mathrm{vc}(\cG) \le 2kd$.
\end{enumerate} 
We prove the second claim first. For each $j \in [k]$, consider the class $\cC^{(j)}$ of subsets of $\Reals^{k} \times \Reals^{k+1} \times \Reals$ of the form
\begin{align}\label{eq:Hsets}
    \left\{ (v,z,r) \in \Reals^{k} \times \Reals^{k+1} \times \Reals : |h^{(j)}_{\Sigma,\xi} (v,z)| \ge r \right\}.
\end{align}
Then evidently each set $\{ (v,z,r) : g_{\Sigma,\xi}(v,z) \ge r \}$ is of the form $C^{(1)} \cup \ldots \cup C^{(k)}$ with $C^{(j)} \in \cC^{(j)}$. By the standard VC dimension estimates \citep[Prop.~3.6.7]{GineNickl}, then, $\mathrm{vc}(\cG) \le \mathrm{vc}(\cH^{(1)}) + \ldots + \mathrm{vc}(\cH^{(k)})$. On the other hand, each set in \eqref{eq:Hsets} is itself the union of the sets $\{(v,z,r) : h^{(j)}_{\Sigma,\xi}(v,z) \ge r\}$ and $\{ (v,z,r) : h^{(j)}_{\Sigma,\xi} \le - r\}$, and therefore $\mathrm{vc}(\cH^{(j)}) \le 2d$. It remains to prove the first claim.

To that end, fix some $\Sigma = (A,b,c)$ and $\xi$ and let $\theta$ be a vector of dimension $n^2 + 3n$ obtained by listing the entries of $A,b,c,\xi$ in some fixed order. Then an induction argument together with the fact that the function $\sigma(r) = \tanh r$ satisfies the identity $\sigma'(r) = 1-\sigma^2(r)$ can be used to show that each function $h^{(j)}_{\Sigma,\xi}$ can be written in the form
\begin{align*}
    P(\sigma(R_1(\theta,v,z)),\ldots,\sigma(R_n(\theta,v,z)),\theta,v,z),
\end{align*}
where $P$ is a polynomial of degree at most $3k-1$ and each $R_i$ is a polynomial of degree at most $2$ \citep{Sontag_recur}. This implies, in turn, that
\begin{align*}
    \mathrm{vc}(\cH^{(1)}) = \ldots = \mathrm{vc}(\cH^{(k)}) \le 3n^6 + 5n^3 \log_2 k
\end{align*}
\citep[Cor.~7]{Sontag_recur}. Substituting the resulting estimate of $\mathrm{vc}(\cG)$ into \eqref{eq:ERM_bound}, we see that
\begin{align}\label{eq:ERM_bound_2}
    \bar{\cL}(\hat{\Sigma},\hat{\xi}) \le \bar{\cL}^* + c(M(M+\sqrt{n}T)+ \gamma_\sF(R)) \sqrt{\frac{k(n^6 + n^3 \log_2 k)\log N + \log(\frac{1}{\delta})}{N}}
\end{align}
with probability at least $1-\delta$.

\paragraph{The final risk bound.} For any $\Sigma,\xi$ and any $u \in \cU$,
\fix{\begin{align*}
   & \| (\sS^*_{k+1} \circ Y_{k,\Sigma,\xi} \circ \sS_k) u - (\sS^*_{k+1} \circ \sS_{k+1} \circ \sF)u \|_\infty \\
	& = \| J^k_0(\sF_{\Sigma,\xi}B_{k-1}u,\cdot) - \sF_{\Sigma,\xi} B_{k-1}u,\cdot) \|_\infty \\
    &= \sup_{t \in [0,T]} | J^k_0(\sF_{\Sigma,\xi}B_{k-1}u,t) - B_k (\sF u,t) | \\
   \begin{split}
		&\le \max_{1 \le j \le k} | J^k_0(\sF_{\Sigma,\xi}B_{k-1}u,t_j) - B_k (\sF u,t_j) | \\
		&\qquad + \sup_{t \in [0,T]}\min_{1 \le j \le k} |J^k_0(\sF_{\Sigma,\xi}B_{k-1}u,t) - J^k_0(\sF_{\Sigma,\xi}B_{k-1}u,t_j)|
         + \sup_{t \in [0,T]}\min_{1 \le j \le k} |B_k(\sF u,t_j) - B_k(\sF u,t)| \\
		& \le \max_{1 \le j \le k} | J^k_0(\sF_{\Sigma,\xi}B_{k-1}u,t_j) - B_k (\sF u,t_j) | \\
		& \qquad + 2\sup_{t \in [0,T]} |J^k_0 \sF_{\Sigma,\xi}B_{k-1}u(t) - \sF_{\Sigma,\xi}B_{k-1}u(t)|  + \sup_{t \in [0,T]}\min_{1 \le j \le k} |\sF_{\Sigma,\xi}B_{k-1}u(t) - \sF_{\Sigma,\xi}B_{k-1}u(t_j)| \\
		& \qquad + \sup_{t \in [0,T]}\min_{1 \le j \le k} |B_k(\sF u,t_j) - B_k(\sF u,t)|,
   \end{split}
\end{align*}}
where the last two terms can be upper-bounded using the moduli of continuity of $B_k(\sF_{\Sigma,\xi}B_{k-1}u,\cdot)$ and $B_k(\sF u,\cdot)$, which, by Lemma~\ref{lm:Bernstein}, are upper-bounded by twice the moduli of continuity of $\sF_{\Sigma,\xi}B_{k-1}u$ and $\sF u$, respectively. Thus, using Assumption~\ref{as:outputs} and Lemma~\ref{lm:Hmap}, we get
\fix{\begin{align*}
\begin{split}
    & \| (\sS^*_{k+1} \circ Y_{k,\Sigma,\xi} \circ \sS_k) u - (\sS^*_{k+1} \circ \sS_{k+1} \circ \sF)u \|_\infty \\
    & \qquad \le \max_{1 \le j \le k} | B_k(\sF_{\Sigma,\xi}B_{k-1}u,t_j) - B_k (\sF u,t_j) |  \\
	& \qquad \qquad  +  \frac{2\sqrt{n}}{k} MT e^{MT} + 2\omega_{\cY}\left(\frac{T}{k}\right) \\
	& \qquad \qquad + 2\|J^k_0 \sF_{\Sigma,\xi} B_{k-1}u - \sF_{\Sigma,\xi}B_{k-1}u \|_\infty.
\end{split}
\end{align*}
Taking expectation of both sides with respect to $\mu$ yields, for any $(\Sigma,\xi) \in \cF(M)$,
\begin{align*}
&\Exp_\mu \left[ \| (\sS^*_{k+1} \circ Y_{k,\Sigma,\xi} \circ \sS_k) u - (\sS^*_{k+1} \circ \sS_{k+1} \circ \sF)u \|_\infty \right]  \\
& \quad \le  \bar{\cL}(\Sigma,\xi) +  \frac{2\sqrt{n}}{k} MT e^{MT} + 2\omega_{\cY}\left(\frac{T}{k}\right) + 2 \sup_{(\Sigma,\xi) \in \cF(M)} \Exp_\mu [\|J^k_0 \sF_{\Sigma,\xi}B_{k-1}u -\sF_{\Sigma,\xi}B_{k-1}u \|_\infty ].
\end{align*}
Using this and \eqref{eq:ERM_bound_2} in the bound of Theorem~\ref{thm:riskbound}, we get the statement of the theorem.}

\section*{Acknowledgments}
The work of J.~Hanson and M.~Raginsky was supported in part by the Illinois Institute for Data Science and Dynamical Systems (iDS$^2$), an NSF HDR TRIPODS institute, under award CCF-1934986 and by the Center for Advanced Electronics through Machine Learning (CAEML) NSF I/UCRC award CNS-16-24811.

\bibliography{RNN_PAC_arxiv_corr.bbl}

\begin{thebibliography}{11}
\providecommand{\natexlab}[1]{#1}
\providecommand{\url}[1]{\texttt{#1}}
\expandafter\ifx\csname urlstyle\endcsname\relax
  \providecommand{\doi}[1]{doi: #1}\else
  \providecommand{\doi}{doi: \begingroup \urlstyle{rm}\Url}\fi

\bibitem[Chua(1980)]{Chua80}
Leon~O. Chua.
\newblock Device modeling via basic nonlinear circuit elements.
\newblock \emph{IEEE Transactions on Circuits and Systems}, {CAS}-27\penalty0
  (11):\penalty0 1014--1044, November 1980.

\bibitem[De{Vore} and Lorentz(1993)]{DevLor93}
Ronald~A. De{Vore} and George~G. Lorentz.
\newblock \emph{Constructive Approximation}.
\newblock Springer-Verlag, New York, 1993.

\bibitem[Farrell et~al.(2020)Farrell, Liang, and Misra]{Farrell_etal}
Max~H. Farrell, Tengyuan Liang, and Sanjog Misra.
\newblock Deep neural networks for estimation and inference.
\newblock \emph{Econometrica}, 2020.
\newblock URL \url{https://arxiv.org/abs/1809.09953}.
\newblock To appear.

\bibitem[{Funahashi} and {Nakamura}(1993)]{funahashi_1993}
{Ken-Ichi} {Funahashi} and Yuichi {Nakamura}.
\newblock Approximation of dynamical systems by continuous time recurrent
  neural networks.
\newblock \emph{Neural Networks}, 6\penalty0 (6):\penalty0 801 -- 806, 1993.

\bibitem[Gin\'e and Nickl(2016)]{GineNickl}
Evarist Gin\'e and Richard Nickl.
\newblock \emph{Mathematical Foundations of Infinite-Dimensional Statistical
  Models}.
\newblock Cambridge University Press, 2016.

\bibitem[Hajek and Raginsky(2019)]{HajekRaginsky}
Bruce Hajek and Maxim Raginsky.
\newblock {ECE 543: Statistical Learning Theory}.
\newblock University of Illinois lecture notes, 2019.
\newblock URL \url{http://maxim.ece.illinois.edu/teaching/SLT/SLT.pdf}.

\bibitem[Hanson and Raginsky(2020)]{HansonRaginskyRNN}
Joshua Hanson and Maxim Raginsky.
\newblock Universal simulation of stable dynamical systems by recurrent neural
  nets.
\newblock In \emph{Proc. 2nd Conference on Learning for Dynamics and Control},
  volume 120 of \emph{Proceedings of Machine Learning Research}, pages
  384--392, 2020.

\bibitem[Hirsch and Smale(1974)]{hirsch_1974}
Morris~W. Hirsch and Stephen Smale.
\newblock Nonautonomous equations and differentiability of flows.
\newblock In \emph{Differential equations, dynamical systems, and linear
  algebra}, chapter~15, pages 296--303. Academic Press, 1974.

\bibitem[Li(2000)]{Li_Bernstein}
Zhongkai Li.
\newblock Bernstein polynomials and modulus of continuity.
\newblock \emph{Journal of Approximation Theory}, 102:\penalty0 171--174, 2000.

\bibitem[Sontag(1992)]{sontag_1992}
Eduardo~D. Sontag.
\newblock {Neural nets as systems models and controllers}.
\newblock In \emph{Proc. Seventh Yale Workshop on Adaptive and Learning
  Systems}, pages 73--79, 1992.

\bibitem[Sontag(1998)]{Sontag_recur}
Eduardo~D. Sontag.
\newblock A learning result for continuous-time recurrent neural networks.
\newblock \emph{Systems and Control Letters}, 34:\penalty0 151--158, 1998.

\end{thebibliography}

\end{document}